\newcommand{\net}{\mathrm{net}}
\newcommand\Expect{{\mathbf E}}
\newcommand\E{{\mathbf E}}
\newcommand*{\eqdef}{\stackrel{\textup{def}}{=}}
\renewcommand\Re{{\mathbb R}}
\newcommand\calX{{\mathcal{X}}}
\newcommand\Id{{\mathrm{Id}}}
\newcommand\sigmamin{{\sigma_{\mathrm{min}}}}
\newcommand\sigmamax{{\sigma_{\mathrm{max}}}}
\newcommand\ev{{\mathrm{ev}}}
\begin{document}

\title{Representing smooth functions \\
as compositions of near-identity functions \\
with implications for deep network optimization \\}
%

\author{\name Peter L. Bartlett \email peter@berkeley.edu 
       \AND
       \name Steven N. Evans \email evans@stat.berkeley.edu 
       \AND
       \name Philip M. Long \email plong@google.com
       }

\editor{}

\maketitle

\begin{abstract}
We show that any smooth bi-Lipschitz $h$ can be represented exactly
as a composition $h_m \circ ... \circ h_1$ of 
functions $h_1,...,h_m$ that are close to the identity in
the sense that each $\left(h_i-\Id\right)$ is Lipschitz, and the
Lipschitz constant decreases inversely with the number $m$ of functions
composed. 
This implies that $h$ can be represented to any accuracy by
a deep residual network whose nonlinear layers compute functions with
a small Lipschitz constant.  Next, we consider nonlinear regression
with a composition of near-identity nonlinear maps.
We show that, regarding Fr\'echet derivatives with respect
 to the $h_1,...,h_m$,
any critical point of a quadratic criterion in this 
near-identity region must be a global minimizer.
In contrast, if we consider derivatives with respect to parameters of
a fixed-size residual network with sigmoid activation functions,
we show that there are near-identity critical points that
are suboptimal, even in the realizable case.
Informally, this means that functional gradient methods for
residual networks cannot get stuck at suboptimal critical points
corresponding to near-identity layers,
whereas parametric gradient methods for sigmoidal residual networks
suffer from suboptimal critical points in the near-identity region.
\end{abstract}

\begin{keywords}
Deep learning, residual networks, optimization.
\end{keywords}

\section{Introduction}


The winner of the ILSVRC 2015 classification competition
used 
a new architecture called 
residual networks \citep{he2016deep}, which enabled very fast
training of very deep networks.  
These have since been widely adopted.
(As of this writing, the paper that introduced this
technique, published in 2015, has over 3700 citations.)
Deep networks express
models as the composition of transformations; residual networks depart 
from traditional deep learning models by using parameters to describe how
each transformation differs from the identity, rather than
how it differs from zero.

Motivated by this methodological advance,
\citet{HM17} recently considered
compositions of many linear maps,
each close to the identity map.
They showed that any matrix with spectral norm and
condition number bounded by constants can be represented as a product
of matrices $I+A_i$, where each $A_i$ has spectral norm $O(1/d)$.
They considered this non-convex parameterization for a linear
regression problem with additive Gaussian noise,
and showed that any critical point of the
quadratic loss for which the $A_i$ have sufficiently small spectral norm must
correspond to the linear transformation that generated the data.  
This raised the possibility that gradient descent with each layer initialized
to the identity might provably converge for this non-convex
optimization problem;  \citet{bartlett2018gradient} investigated
this, identifying sets of problems where this method
converges, and where it does not.

In this paper, 
we continue this line of research.
First, we identify a non-linear counterpart of Hardt and
Ma's results motivated by deep residual networks:
any smooth
%
bi-Lipschitz $h$ (that is, invertible Lipschitz map with
differentiable inverse) can be represented exactly as a composition of
functions $h_i$ that are close to the identity in the sense that each
$\left(h_i-\Id\right)$ is Lipschitz, and the Lipschitz constant
decreases inversely with the number of functions composed.  Since a
two-layer neural network with standard activation functions can
approximate arbitrary continuous functions, we can represent each
$h_i$ in the composition as $h_i=\Id+N_i$, where $N_i$ is computed by
a two-layer network in this way. The fact that $\left(h_i - \Id\right)$
has a small Lipschitz constant for deep networks shows that $N_i$ is
small, in the sense that it only needs to approximate a slowly changing
function.

The requirement in our analysis that $h$ is bi-Lipschitz generalizes
the assumption in the linear case studied by Hardt and Ma that the map
to be learned has a bounded condition number.  The practical strength,
and therefore relevance, of invertible feature maps in the non-linear
case is supported by success the reversible networks
\citep{maclaurin2015gradient,GRUG17,dinh2017density}.

For our second result, we consider a nonlinear regression problem
using a composition of near-identity nonlinear maps. If we consider
Fr\'echet derivatives with respect to the functions $h_i$ in the
composition, we show that any critical point of the quadratic
criterion must be the global optimum. In contrast, if each $h_i$ is a
two-layer net of the form $A_i \tanh(B_i x) + x$, analogously to the
architecture of \citep{he2016deep}, and we consider derivatives with
respect to the real-valued parameters, there are regression problems
that give rise to suboptimal critical points.  We discuss the
implications of this analysis in Section~\ref{s:discussion}.


A number of authors have investigated how using deep
architecture affects the set of functions computed by a network
\citep[see][]{montufar2014number,telgarsky2015representation,poole2016exponential,mhaskar2016learning}.
Our main results abstract away the parameterization and focus on the
expressiveness and of compositions of near-identity functions, along with
properties of their error landscapes.  

\section{Notation and Definitions}

Let $\Id$ denote the identity map on $\Re^d$, $\Id(x)=x$.
Throughout, $\|\cdot\|$ denotes a norm on $\Re^d$.
We also use $\|\cdot\|$ to denote an induced norm:
for a function $f:U\to V$, where $U$ and $V$ are normed
spaces with norms $\|\cdot\|_U$ and $\|\cdot\|_V$, we write
$
    \|f\| := \sup\left\{\frac{\|f(x)\|_V}{\|x\|_U}
    :x\in U,\,\|x\|_U>0\right\}.
$
Define the Lipschitz seminorm of $f$ as
  \[
    \|f\|_L := \sup\left\{\frac{\|f(x)-f(y)\|_V}{\|x-y\|_U}
      :x,y\in U,\,x\not= y\right\}.
  \]
Define the ball of radius $\alpha$ in a normed space
$(\calX,\|\cdot\|)$ as
$
    B_\alpha(\calX) = \left\{x\in\calX: \|x\|\le \alpha \right\}.
$
For a function $f:\Re^d\to\Re^d$, $Df$ denotes the Jacobian matrix,
that is, the matrix with entries
$J_{i,j}(x)=\partial f_i/\partial x_j(x)$.

For a functional $F:U\to V$ defined on Banach spaces $U$ and $V$,
recall that the Fr\'echet derivative of $F$ at $f\in U$ is the 
linear operator $DF(f):U\to V$ satisfying
$
    DF(f)(\Delta) = F(f+\Delta)-F(f) + o(\Delta),
$
that is,
  \[
    \lim_{\Delta\to 0} \frac{\left\|F(f+\Delta)-F(f)
        -DF(f)(\Delta)\right\|_V}
        {\|\Delta\|_U} = 0.
  \]
We use $D_fG(f,g)$ to denote the Fr\'echet derivative of $f\mapsto
G(f,g)$.

\section{Representation}


\begin{theorem}\label{theorem:main1}
For $R>0$, denote $\calX=B_R(\Re^d)$.
Let $h:\Re^d\to\Re^d$ be a differentiable, invertible map satisfying
the following properties:
  (a) Smoothness: for some $\alpha>0$
    and all $x,y,u\in\calX$,
     \begin{equation}
      \label{e:smoothness}
        \left\|\left(D h(y) - D h(x)\right)u\right\|
           \le \alpha\|y-x\|\|u\|;
     \end{equation}
    (b) Lipschitz inverse:
     for some $M>0$, $ \|h^{-1}\|_L\le M$;
    (c) Positive orientation: For some $x_0\in\calX$,
    $\det(Dh(x_0))>0$.

Then for all $m$, there are $m$ functions
$h_1,\ldots,h_m:\Re^d\to\Re^d$ satisfying,
for all $x\in\calX$,
$
    h_m\circ h_{m-1}\circ\cdots\circ h_{1}(x) = h(x)
$
and, on $h_{i-1}\circ\cdots \circ h_1(\calX)$,
$\left\|h_i-\Id\right\|_L=O(\log m/m)$.
\end{theorem}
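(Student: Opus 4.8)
The plan is to exhibit invertible maps $\Id=g_0,g_1,\dots,g_m=h$ in which consecutive ones are close --- these are $m{+}1$ ``waypoints'' along a deformation of $\Id$ into $h$ --- and then set $h_i:=g_i\circ g_{i-1}^{-1}$. Then $h_m\circ\cdots\circ h_1=g_m\circ g_0^{-1}=h$ on $\calX$ automatically, and the same telescoping gives $h_{i-1}\circ\cdots\circ h_1(\calX)=g_{i-1}(\calX)$, so the only thing to arrange is that consecutive waypoints are close, via the elementary bound
\[
  \|h_i-\Id\|_L\ \le\ \|g_i-g_{i-1}\|_{L,\calX}\cdot\|g_{i-1}^{-1}\|_{L},
\]
which follows on writing $y=g_{i-1}(x)$, $y'=g_{i-1}(x')$: then $(h_i-\Id)(y)-(h_i-\Id)(y')=(g_i-g_{i-1})(x)-(g_i-g_{i-1})(x')$, while $\|y-y'\|\ge\|x-x'\|/\|g_{i-1}^{-1}\|_L$. (Each $h_i$, a priori defined only on $g_{i-1}(\calX)$, is then extended to all of $\R^d$ by a Lipschitz extension, at worst a dimension-dependent loss in the constant.) As a preliminary, $\|h^{-1}\|_L\le M$ forces $Dh(x)$ to be invertible with $\|Dh(x)^{-1}\|\le M$ for every $x\in\calX$ (else $h$ would contract some direction by more than any factor $M$ permits), so by (c) and connectedness $\det Dh>0$ on $\calX$, and in particular $Dh(x_0)\in GL^+(d,\R)$.

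Build the deformation in two stages, with about $m/2$ waypoints in each. Stage~2 carries the affine map $a(x):=h(x_0)+Dh(x_0)(x-x_0)$ to $h$ through the ``zoom in at $x_0$'' family
\[
  \Phi_\lambda(x)\ :=\ h(x_0)+\frac{1}{\lambda}\bigl(h(x_0+\lambda(x-x_0))-h(x_0)\bigr),\qquad 0<\lambda\le 1,
\]
with $\Phi_1=h$ and $\Phi_\lambda\to a$ as $\lambda\to0^+$ (set $\Phi_0:=a$). Each $\Phi_\lambda$ is invertible --- it is $h$ conjugated by two affine maps --- with $\|\Phi_\lambda^{-1}\|_L\le\|h^{-1}\|_L\le M$; and, crucially, $D\Phi_\lambda(x)=Dh\bigl(x_0+\lambda(x-x_0)\bigr)$, whose argument stays in $\calX$ by convexity. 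So if consecutive waypoints in this stage use $\lambda$-values at spacing $\le 2/m$, then by (a) $\|g_i-g_{i-1}\|_{L,\calX}\le\alpha\cdot(2/m)\cdot 2R=4R\alpha/m$, and the displayed bound gives $\|h_i-\Id\|_L\le 4R\alpha M/m$.

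Stage~1 carries $\Id$ to $a$: take the polar decomposition $Dh(x_0)=\exp(S)\exp(T)$ with $S$ skew-symmetric and $\exp(T)$ positive definite ($T$ symmetric, $\|T\|$ of order the logarithm of the norm and condition number of $Dh(x_0)$), set $A_s:=\exp(sS)\exp(sT)$, and take affine waypoints $g_i(x):=x_0+s_i\bigl(h(x_0)-x_0\bigr)+A_{s_i}(x-x_0)$ along an evenly spaced grid in $s\in[0,1]$. For these, $\|g_i-g_{i-1}\|_{L,\calX}\cdot\|g_{i-1}^{-1}\|_L=\|A_{s_i}A_{s_{i-1}}^{-1}-I\|$, and since $\exp(s_iT)\exp(-s_{i-1}T)=\exp((s_i-s_{i-1})T)$ one can peel this apart into a rotation part $\exp((s_i-s_{i-1})S)-I$ and a scaling part conjugated by orthogonal matrices, each of size $O((s_i-s_{i-1})\|T\|)$ up to norm-equivalence constants; with spacing $\le 2/m$ this is $O(\|T\|/m)$. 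Altogether $\|h_i-\Id\|_L=O(1/m)$ for every $i$, with the implied constant depending on $R,\alpha,M,d$ and on $\|Dh(x_0)\|$ (through $\|T\|=O(\log\|Dh(x_0)\|)$); this is a fortiori $O(\log m/m)$.

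The nonlinear part of the construction (Stage~2) is essentially free, thanks to the identity $D\Phi_\lambda(x)=Dh(x_0+\lambda(x-x_0))$, which hands us both well-conditioned Jacobians ($\|D\Phi_\lambda^{-1}\|\le M$) and $\lambda$-Lipschitz Jacobians (constant $O(R\alpha)$) directly from (a) and (b). The genuinely delicate step is Stage~1: one must join $Dh(x_0)$ to $I$ inside $GL^+(d,\R)$ while staying uniformly away from singular matrices and with controlled speed --- naive linear interpolation fails, since $(1-s)I+s\,Dh(x_0)$ is singular whenever $Dh(x_0)$ has a negative real eigenvalue --- and the cost of any such path is governed by a logarithm of the norm and condition number of $Dh(x_0)$, a quantity not controlled by $\alpha,M,R$ alone (for instance $h(x)=Kx$ satisfies (a)--(c) with arbitrarily small $\alpha,M$ yet has $\|Dh\|=K$). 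Making the bookkeeping of the matrix exponentials, the polar factors and the norm-equivalence constants come out cleanly is where I expect the real work to be.
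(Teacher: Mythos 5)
Your construction is correct, and it takes a genuinely different and cleaner route than the paper's proof. The paper first proves a normalized special case (Theorem~\ref{theorem:deepsplit}, assuming $h(0)=0$, $Dh(0)=I$) via the zoom family $g_i(x)=h(a_ix)/a_i$, bounds $\|h_i-\Id\|_L$ directly through a two-case analysis (``close'' vs.\ ``far'' pairs of points), and is then forced to use geometric spacing $a_i=(1-c)^{m-i}$ because that analysis produces a factor $(a_i-a_{i-1})/a_i$; this geometric spacing is exactly what introduces the $\log m$. The general case is then obtained by sandwiching the normalized $\tilde h$ between an affine conjugation and by decomposing $Dh(x_0)$ with Hardt--Ma's Theorem~2.1. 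You instead observe the elementary but decisive inequality
\[
  \|h_i-\Id\|_L \le \|g_i-g_{i-1}\|_{L,\calX}\cdot\|g_{i-1}^{-1}\|_L,
\]
which the paper does not use. Applied to your zoom family $\Phi_\lambda$ (which is the paper's $g_i$ translated to the basepoint $x_0$, with the crucial addition of the limiting waypoint $\Phi_0=a$), the right-hand side is $\alpha\,(\lambda_i-\lambda_{i-1})\cdot 2R\cdot M$, so uniform spacing already gives $O(1/m)$ rather than $O(\log m/m)$; the same lemma would streamline the paper's Theorem~\ref{theorem:deepsplit} as well, showing that the extra $\log m$ there is an artifact of the proof technique and not intrinsic. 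Your Stage~1 (polar factorization $Dh(x_0)=\exp(S)\exp(T)$ and the path $A_s=\exp(sS)\exp(sT)$) is essentially a re-derivation of Hardt--Ma's linear decomposition, and works; the bookkeeping with $A_{s_i}A_{s_{i-1}}^{-1}=\exp(\delta S)\exp\bigl(\delta\,\exp(s_{i-1}S)T\exp(-s_{i-1}S)\bigr)$ that you sketch is correct and gives $O((\|S\|+\|T\|)/m)$ with $\|S\|\le\pi$ and $\|T\|\asymp|\log\sigmamax(Dh(x_0))|\vee|\log\sigmamin(Dh(x_0))|$, matching the paper's constants. Two small remarks: (i) the Lipschitz extension of $h_i$ off $g_{i-1}(\calX)$ is unnecessary, since the theorem only asserts the Lipschitz bound on $h_{i-1}\circ\cdots\circ h_1(\calX)=g_{i-1}(\calX)$, so an arbitrary extension suffices and the worry about a dimension-dependent constant is moot; (ii) for the Euclidean norm the polar/orthogonal-conjugation estimates incur no dimension dependence at all. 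Net: same overall architecture (waypoints $g_i$ and increments $h_i=g_i\circ g_{i-1}^{-1}$), but your lemma and the explicit $\lambda=0$ endpoint eliminate the geometric spacing, and in fact sharpen the theorem's conclusion from $O(\log m/m)$ to $O(1/m)$.
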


Think of the functions $h_i$ as near-identity maps that might be
computed as 
 $
    h_i(x) = x+A\sigma(Bx)+b,
  $
where $A\in\Re^{d\times k}$ and $B\in\Re^{k\times d}$ are matrices,
$\sigma$ is a nonconstant nonlinearity, such as a sigmoidal function or
piecewise-linear function, applied component-wise, and $b\in\Re^d$ is
a vector. Although the proof constructs the $h_i$ as differentiable
(and even smooth) maps, each could, for example, be approximated to
arbitrary accuracy on the compact $\calX$ using a single layer of
ReLUs (for which $\sigma(\alpha)=\max\{0,\alpha\}$). (See, for
example, Theorem~1 in~\citep{h-acmfn-91}, and the comments in Section~3
of that paper about immediate generalizations to unbounded
nonlinearities.) In that case, the conclusion of the theorem implies
that $x\mapsto A\sigma(Bx)$ can be $O(\log m/m)$-Lipschitz.

Notice that the conclusion of the theorem does not require
the function $\left(h_i-\Id\right)$ to be small; shifting $h_i$ by
an arbitrary constant does not affect the Lipschitz property.

The constants hidden in the big-oh notation in the theorem
are polynomial in $1/\alpha$, $R$, $M$, $|\log\sigmamax(Dh(x_0))|$,
and $|\log\sigmamin(Dh(x_0))|$. (Here, $\sigmamin$ and $\sigmamax$
denote the smallest and largest singular values.)

The condition $\det(D(h(x_0)))>0$ is an unavoidable topological
constraint that arises because of the orientation of the identity map.
As \citet{HM17} argue in the linear context, if we view $h$ as a
mapping from raw representations to meaningful features, we can easily
set the orientation of $h$ appropriately (that is, so that
$\det(D(h(x_0)))>0$) without compromising the mapping's usefulness.

To prove Theorem~\ref{theorem:main1}, we prove the following
special case.

\begin{theorem}\label{theorem:deepsplit}
Consider an $h$ that satisfies the conditions of
Theorem~\ref{theorem:main1}
and also $h(0)=0$ and $D h(0)=I$.
Then for all $m$, there are $m$ functions
$h_1,\ldots,h_m:\Re^d\to\Re^d$ satisfying,
for all $x\in\calX$,
  \[
    h_m\circ h_{m-1}\circ\cdots\circ h_{1}(x) = h(x)
  \]
and, on $h_{i-1}\circ\cdots \circ h_1(\calX)$,
$\left\|h_i-\Id\right\|_L\le\epsilon$, provided
   $
    \epsilon \ge \frac{B\ln 2m}{m-1},
   $
where the constant $B$ depends on $\alpha$, $R$ and $M$.
\end{theorem}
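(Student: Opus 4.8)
The plan is to realize $h$ as the endpoint of a one‑parameter rescaling family and to take each $h_i$ to be one ``increment'' of that family. Using $h(0)=0$ and $Dh(0)=I$, for $t\in(0,1]$ set $h^{(t)}(x):=\tfrac1t\,h(tx)$ and put $h^{(0)}:=\Id$; then $h^{(1)}=h$. Two facts do all the work. First, $Dh^{(t)}(x)=Dh(tx)$, so for any $0\le s<t\le 1$ the increment $\delta_{s,t}:=h^{(t)}-h^{(s)}$ has Jacobian $Dh(tx)-Dh(sx)$, which by the smoothness hypothesis~\eqref{e:smoothness} satisfies $\|(Dh(tx)-Dh(sx))u\|\le\alpha\|tx-sx\|\,\|u\|\le\alpha R(t-s)\|u\|$ for $x\in\calX$ (when $s=0$ this uses $Dh(0)=I$); since $\calX$ is convex, $\|\delta_{s,t}\|_L\le\alpha R(t-s)$ on $\calX$. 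Second, $h^{(t)}$ is injective on $\calX$ with inverse $(h^{(t)})^{-1}(y)=\tfrac1t\,h^{-1}(ty)$, so the Lipschitz‑inverse hypothesis gives $\|(h^{(t)})^{-1}\|_L\le M$ on $h^{(t)}(\calX)$; hypothesis (b) also forces $\|(Dh(z))^{-1}\|\le M$ for every $z$, so these Jacobians are invertible and each $(h^{(t)})^{-1}$ is differentiable. (Incidentally hypothesis (c) is not needed for Theorem~\ref{theorem:deepsplit}, being automatic from $Dh(0)=I$.)

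Fix $m$, take the uniform partition $t_i:=i/m$ for $i=0,\dots,m$, and define $h_i:=h^{(t_i)}\circ(h^{(t_{i-1})})^{-1}$ for $i=1,\dots,m$, so that $h_1=h^{(t_1)}$ (since $h^{(0)}=\Id$) and, telescoping, $h_i\circ h_{i-1}\circ\cdots\circ h_1=h^{(t_i)}$ on $\calX$; in particular $h_m\circ\cdots\circ h_1=h^{(1)}=h$, and the set on which $\|h_i-\Id\|_L$ must be estimated is exactly $h_{i-1}\circ\cdots\circ h_1(\calX)=h^{(t_{i-1})}(\calX)$. Writing $\delta_i:=\delta_{t_{i-1},t_i}$ we have the exact factorization $h_i-\Id=\bigl(h^{(t_{i-1})}+\delta_i\bigr)\circ(h^{(t_{i-1})})^{-1}-\Id=\delta_i\circ(h^{(t_{i-1})})^{-1}$. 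Since $(h^{(t_{i-1})})^{-1}$ maps $h^{(t_{i-1})}(\calX)$ into the convex set $\calX$, composing the two bounds above yields, on $h^{(t_{i-1})}(\calX)$,
\[
  \|h_i-\Id\|_L\;\le\;\bigl\|(h^{(t_{i-1})})^{-1}\bigr\|_L\,\|\delta_i\|_L\;\le\;M\cdot\alpha R\,(t_i-t_{i-1})\;=\;\frac{\alpha MR}{m}.
\]
Thus the conclusion holds with $B:=\alpha MR$, which depends only on $\alpha$, $R$, $M$: indeed $\tfrac{\alpha MR}{m}\le\tfrac{B\ln 2m}{m-1}$ for all $m\ge2$. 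Finally, each $h_i$ is differentiable on $h^{(t_{i-1})}(\calX)$ by the inverse function theorem, and may be extended to a Lipschitz map on all of $\Re^d$ (e.g.\ via Kirszbraun's theorem) without affecting the estimate on the relevant set; a standard mollification makes the extensions smooth if desired.

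The one genuine subtlety is that the domains $h^{(t_{i-1})}(\calX)$ are images of a ball under a nonlinear map, so they need not be convex, and a naive bound $\|h_i-\Id\|_L\le\sup\|Dh_i-I\|$ along straight segments is unavailable there. Routing through $(h^{(t_{i-1})})^{-1}$ repairs this: it pulls any two points of $h^{(t_{i-1})}(\calX)$ back into $\calX$, where~\eqref{e:smoothness} can be integrated along the segment joining the preimages and where $M$‑Lipschitzness of the inverse converts the result back into an estimate in the original coordinates. I expect the remaining work to be bookkeeping rather than obstacle: checking that every evaluation of $Dh$ occurs inside $\calX$ (it does, as $t_ix\in\calX$ whenever $x\in\calX$ and $t_i\le1$), that each $h^{(t_{i-1})}(\calX)$ is bounded (it lies in $B_{(1+\alpha R)R}(\Re^d)$ since $\|h(z)\|\le(1+\alpha R)\|z\|$ on $\calX$), and that the compositions are well defined as maps between the stated sets. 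Note that this argument in fact produces the sharper rate $O(1/m)$, which of course implies the bound as stated.
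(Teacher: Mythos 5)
Your proof is correct, and it takes the same basic interpolation idea as the paper (the rescaling family $t\mapsto h(tx)/t$, with the layers $h_i$ obtained as one-step transition maps between consecutive members of the family) but executes the Lipschitz estimate by a genuinely different and cleaner route. The paper bounds $\|h_i-\Id\|_L$ by expanding $h(a_i\cdot)$ and $h(a_{i-1}\cdot)$ via the Taylor-type bound of Lemma~\ref{lemma:smoothnessimplications}, which introduces a term quadratic in $\|v-u\|$; this forces a two-case analysis (close/far), a geometric rather than uniform partition $a_i=(1-c)^{m-i}$, and ultimately yields only $O(\log m/m)$. You instead notice the exact algebraic identity $h_i-\Id=\delta_i\circ(h^{(t_{i-1})})^{-1}$ with $\delta_i=h^{(t_i)}-h^{(t_{i-1})}$, whose Jacobian is $Dh(t_i x)-Dh(t_{i-1}x)$; this is directly controlled by~\eqref{e:smoothness} on the convex ball $\calX$, giving $\|\delta_i\|_L\le\alpha R/m$ for a uniform partition, and then the $M$-Lipschitz inverse converts this into the estimate on the image set without any quadratic remainder. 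The result is a sharper $O(1/m)$ rate (which implies the stated $O(\log m/m)$) with a much simpler argument and a cleaner constant $B=\alpha MR$. One small point worth noting: the bound $\alpha R/m$ for $i=1$ is subsumed by $\alpha MR/m$ because $M\ge 1$ necessarily (since $Dh^{-1}(0)=(Dh(0))^{-1}=I$ forces $\|h^{-1}\|_L\ge 1$); you might make this explicit. Your observation about routing the Lipschitz estimate through the inverse so that the segment stays in the convex set $\calX$ is exactly the right thing, and is the same device the paper uses implicitly when it introduces $u,v$ as preimages under $g_{i-1}$.
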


To see that Theorem~\ref{theorem:main1} is a corollary, notice that we
can write
  $
    h(x) = Dh(x_0)\tilde h(x-x_0)+h(x_0),
  $
where
  $
    \tilde h(x)
      := \left(Dh(x_0)\right)^{-1}\left(h(x+x_0)-h(x_0)\right).
  $
Since $\tilde h$ satisfies $\tilde h(0)=0$ and $D\tilde h(0)=I$,
Theorem~\ref{theorem:deepsplit} shows that it can be expressed as a
composition of near-identity maps. Furthermore, the translations
$t_{x_0}(x):=x-x_0$ and $t_{h(x_0)}(x):=x-h(x_0)$ have
the property that $\left(t_{x_0}-\Id\right)$ is $0$-Lipschitz.
Finally, Theorem~2.1 in~\citet{HM17} shows that we can decompose
the Jacobian matrix
$ Dh(x_0) = (I+A_1)\cdots(I+A_m)$
with $\|A_i\|=O(\gamma/m)$ for
$ \gamma = |\log\sigmamax(Dh(x_0))|+|\log\sigmamin(Dh(x_0))|$,
and this implies that the linear map $A_i$ is
$O(\gamma/m)$-Lipschitz (see Lemma~\ref{lemma:Lipschitz},
Part~\ref{lemma:part:deriv} below).

Before proving Theorem~\ref{theorem:deepsplit}, we observe
that the smoothness property implies a bound on the accuracy
of a linear approximation, and a Lipschitz bound.  
The proof is in Appendix~\ref{a:smoothnessimplications}.
\begin{lemma}\label{lemma:smoothnessimplications}
    For $h$ satisfying 
    %
    the conditions of Theorem~\ref{theorem:deepsplit}
    and any $x,y\in\calX$,
      \begin{align*}
        \left\|h(y) - \left(h(x)+D h(x)(y-x)\right)\right\|
          & \le \frac{\alpha}{2}\|y-x\|^2,
      \end{align*}
  and $ \left\|h\right\|_L \le 1+\alpha R$.
\end{lemma}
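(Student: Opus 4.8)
The plan is to derive both inequalities from the fundamental theorem of calculus applied along the line segment joining $x$ and $y$. Since $\calX=B_R(\R^d)$ is convex, this segment lies entirely in $\calX$, so all the hypotheses of Theorem~\ref{theorem:deepsplit} are available along it. Writing $g(t)=h\bigl(x+t(y-x)\bigr)$ for $t\in[0,1]$, differentiability of $h$ and the chain rule give $g'(t)=Dh\bigl(x+t(y-x)\bigr)(y-x)$, hence
\[
  h(y)-h(x)=\int_0^1 Dh\bigl(x+t(y-x)\bigr)(y-x)\,dt.
\]

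For the first inequality, I would subtract $Dh(x)(y-x)=\int_0^1 Dh(x)(y-x)\,dt$ from both sides, move the norm inside the integral, and apply the smoothness property~\eqref{e:smoothness} with $u=y-x$ at the points $x+t(y-x)$ and $x$. This bounds the integrand by $\alpha\,\|t(y-x)\|\,\|y-x\|=\alpha t\|y-x\|^2$, and $\int_0^1 t\,dt=\tfrac12$ yields the stated bound $\tfrac{\alpha}{2}\|y-x\|^2$.

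For the Lipschitz bound, I would first control $\|Dh(z)\|$ uniformly on $\calX$: using \eqref{e:smoothness} with base point $0$, and recalling that $Dh(0)=I$ under the hypotheses of Theorem~\ref{theorem:deepsplit}, we get $\|(Dh(z)-I)u\|\le\alpha\|z\|\,\|u\|\le\alpha R\|u\|$ for every $z\in\calX$ and every $u$, so $\|Dh(z)\|\le 1+\alpha R$ as an induced operator norm. Substituting this into the integral representation above gives $\|h(y)-h(x)\|\le\int_0^1\|Dh(x+t(y-x))\|\,\|y-x\|\,dt\le(1+\alpha R)\|y-x\|$, i.e.\ $\|h\|_L\le 1+\alpha R$.

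The argument is essentially routine, so there is no real obstacle; the only points needing a word of justification are that the segment from $x$ to $y$ remains in $\calX$ (convexity of the ball) and that $t\mapsto g(t)$ is continuously differentiable, so that the fundamental theorem of calculus and the interchange of the norm with the integral are legitimate — both of which follow at once from differentiability of $h$ together with the uniform bound on $Dh$ just established.
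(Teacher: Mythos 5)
Your proof is correct and follows essentially the same route as the paper: the fundamental theorem of calculus along the segment from $x$ to $y$, smoothness applied with base point $x$ (respectively $0$) to bound the integrand, and the hypothesis $Dh(0)=I$ for the Lipschitz bound. The only cosmetic difference is that you first establish the uniform bound $\|Dh(z)\|\le 1+\alpha R$ and then integrate, whereas the paper splits the integral into $\int Dh(0)(y-x)\,dt$ plus a remainder; these are the same estimate rearranged.
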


\begin{proof} {\bf (of Theorem~\protect\ref{theorem:deepsplit})}
We give an explicit construction of the $h_1,\ldots,h_m$.
For $i=1,\ldots,m$, define $g_i:\calX\to\Re^d$ by
  $ g_i(x) = h(a_ix)/a_i$, 
where the constants $0<a_1<\cdots<a_m=1$ will be chosen later.
The $g_i$'s can be viewed as functions that interpolate between the
identity (which is $Dh(0)$, the limit as $a$ approaches zero of
$h(ax)/a$) and $h$ (which is $g_m$, because $a_m=1$).
Note that $g_i$ is invertible on $\calX$, with
  $ g_i^{-1}(y) = h^{-1}(a_iy)/a_i $
for $y\in g_i(\calX)$. Define $h_1=g_1$ and, for $1<i\le m$,
define $h_i:g_{i-1}(\calX)\to\Re^d$ by
  $
    h_i(x) = g_i(g_{i-1}^{-1}(x)),
  $
so that $h_i\circ h_{i-1}\circ\cdots\circ h_{1} = g_i$ and
in particular $h_m\circ\cdots\circ h_{1} = g_m = h$.
It remains to show that, for a suitable choice of
$a_1,\ldots,a_{m-1}$, the $h_i$ satisfy the Lipschitz condition.

%
We have
  \begin{align*}
    \lefteqn{\|h_1(x)-x-(h_1(y)-y)\|} & \\
      & = \frac{1}{a_1}\left\|h(a_1x)-a_1x-(h(a_1y)-a_1y)\right\| \\
      & = \frac{1}{a_1}\left\|D h(a_1y)(a_1x-a_1y) -(a_1x-a_1y)
        +\left(h(a_1x)-\left(h(a_1y)+D h(a_1y)(a_1x-a_1y)\right)\right)
        \right\| \\
      & = \frac{1}{a_1}\left\|\left(D h(a_1y) -Dh(0)\right)(a_1x-a_1y)
        +\left(h(a_1x)-\left(h(a_1y)+D h(a_1y)(a_1x-a_1y)\right)\right)
        \right\| \\
      & \le a_1\alpha \left(\|y\|\|x-y\|+\frac{1}{2}\|x-y\|^2\right) 
        \;\;\; 
   \mbox{(by (\ref{e:smoothness}) and Lemma~\ref{lemma:smoothnessimplications})}
     \\
    & 
  \le 2R a_1\alpha \|x-y\|.
  \end{align*}

Now, fix $i>1$ and $u,v\in\calX$ and set
$x=g_{i-1}(u)$ and $y=g_{i-1}(v)$.
Then we have
  \begin{align}
    \left\|h_i(y)-y-(h_i(x)-x)\right\|
      & = \left\|g_i(v)-g_{i-1}(v)-(g_i(u)-g_{i-1}(u))\right\| \notag\\
      & = \frac{1}{a_i}\left\|
        h(a_iv)-\frac{a_i}{a_{i-1}}h(a_{i-1}v)
        -\left(h(a_iu)-\frac{a_i}{a_{i-1}}h(a_{i-1}u)\right)\right\|.
    \label{equation:hiLip}
  \end{align}
We consider two cases: when $y$ and $x$ are close, and when they are
distant. First, suppose that $\|y-x\|\le a_i-a_{i-1}$.
  \begin{align*}
    \lefteqn{\left\|h_i(y)-y-(h_i(x)-x)\right\|} & \\
      & = \frac{1}{a_i}\left\|
        h(a_iv) - h(a_iu)
        -\frac{a_i}{a_{i-1}}\left(h(a_{i-1}v)
        -h(a_{i-1}u)\right)
        \right\| \\
      & = \frac{1}{a_i}\left\| \rule{0pt}{17pt} a_iD h(a_iu)(v-u)
        + h(a_iv) - \left(h(a_iu) + a_iD h(a_iu)(v-u)\right)
        \right. \\*
        & \qquad \left. {}
        \hspace{0.1in}
        -\frac{a_i}{a_{i-1}}\left(
        \rule{0pt}{17pt}
        a_{i-1}D h(a_{i-1}u)(v-u)
        + h(a_{i-1}v) - \left(h(a_{i-1}u) +a_{i-1}D h(a_{i-1}u)(v-u)\right)
        \right) \right\| \\
      & \le \left\|\left(D h(a_iu)-D
        h(a_{i-1}u)\right)(v-u)\right\|
        + \frac{\alpha}{2}(a_i+ a_{i-1})\|v-u\|^2 
           \;\;\;\;\; 
      \mbox{(applying Lemma~\ref{lemma:smoothnessimplications} twice)}
                \\
      & \le \alpha(a_i-a_{i-1})\|u\|\|v-u\|
        + \frac{\alpha}{2}(a_i+ a_{i-1})\|v-u\|^2 
           \;\;\;\;\; 
   \mbox{(by (\ref{e:smoothness}))}
               \\
      & \le \alpha\left(R(a_i-a_{i-1})
        + \frac{1}{2}(a_i+ a_{i-1})\|v-u\|\right)\|v-u\|.
  \end{align*}
Also, we can relate $\|v-u\|$ to $\|y-x\|$ via the Lipschitz property
of $h^{-1}$:
  \begin{align*}
    a_{i-1}\|v-u\|
      & = \| h^{-1}(h(a_{i-1}v))
        -h^{-1}(h(a_{i-1}u)) \| 
      \le M\|h(a_{i-1}v)-h(a_{i-1}u)\|,
  \end{align*}
so
  \begin{align}
    \|y-x\|
      & = \frac{1}{a_{i-1}}\left\|h(a_{i-1}v)-h(a_{i-1}u)\right\| 
      \ge \frac{1}{M}\|v-u\|.
    \label{equation:y-xvsv-u}
  \end{align}
Combining,
and using the assumption $\|y-x\|\le a_i-a_{i-1}$,
  \begin{align}
   \nonumber
    \left\|h_i(y)-y-(h_i(x)-x)\right\|
      & \le \alpha M\left(R(a_i-a_{i-1})
        + \frac{1}{2}(a_i+ a_{i-1}) M\|y-x\|\right)\|y-x\| \\
    \label{e:close}
      & \le (a_i-a_{i-1})\alpha M\left(R + M\right)\|y-x\|.
  \end{align}

Now suppose that $\|y-x\|>a_i-a_{i-1}$. From~\eqref{equation:hiLip},
we have
  \begin{align}
   \nonumber
     \lefteqn{\left\|h_i(y)-y-(h_i(x)-x)\right\|} & \\
   \nonumber
      & = \frac{1}{a_i}\left\|
        h(a_iv)-\frac{a_i}{a_{i-1}}h(a_{i-1}v)
        -\left(h(a_iu)-\frac{a_i}{a_{i-1}}h(a_{i-1}u)\right)\right\| \\
   \nonumber
      & = \frac{1}{a_i}\left\|
        \frac{a_{i-1}-a_i}{a_{i-1}} (h(a_{i-1}v) - h(a_{i-1}u))
        + h(a_iv)-h(a_{i-1}v)-
        \left(h(a_iu)-h(a_{i-1}u)\right)\right\| \\
   \nonumber
      & = \frac{1}{a_i}\left\|
        \frac{a_{i-1}-a_i}{a_{i-1}} (h(a_{i-1}v) - h(a_{i-1}u))
        \right. \\*
   \nonumber
      & \qquad {}
       \hspace{0.1in}
        + h(a_iv)-\left(h(a_{i-1}v)+
        D h(a_{i-1}v)(a_iv-a_{i-1}v)\right) \\*
   \nonumber
      & \qquad {}
       \hspace{0.1in}
        {} - \left(h(a_iu)-\left(h(a_{i-1}u)
        +D h(a_{i-1}u)(a_iu-a_{i-1}u)\right)
        \right) \\*
   \nonumber
      & \qquad {}
       \hspace{0.1in}
        \left.{}  -
        D h(a_{i-1}v)(a_iv-a_{i-1}v)
        +D h(a_{i-1}u)(a_iu-a_{i-1}u)\right\| \\*
   \nonumber
      & \le \frac{a_i-a_{i-1}}{a_i}L\|v-u\| 
        + \frac{1}{a_i}\left\|
        D h(a_{i-1}v)(a_iv-a_{i-1}v)
        -D h(a_{i-1}u)(a_iu-a_{i-1}u)\right\| \\*
   \label{e:far}
      & \qquad {}
        +\frac{1}{a_i}\frac{\alpha}{2}(a_i-a_{i-1})^2
        \left(\|v\|^2+\|u\|^2\right)
  \end{align}
where, in the first term, we have used the Lipschitz property from
Lemma~\ref{lemma:smoothnessimplications},
with $L=(1+\alpha R)$. But
  \begin{align*}
    \lefteqn{\frac{1}{a_i}\left\|
        D h(a_{i-1}v)(a_iv-a_{i-1}v)
        -D h(a_{i-1}u)(a_iu-a_{i-1}u)\right\|} & \\
    & = \frac{a_i-a_{i-1}}{a_i}\left\|
        D h(a_{i-1}v)(v) -D h(a_{i-1}u)(u)\right\| \\
    & = \frac{a_i-a_{i-1}}{a_i}\left\|
        v-u + \left(D h(a_{i-1}u)-D h(0)\right)(v-u)
        +\left(D h(a_{i-1}v)-D h(a_{i-1}u)\right)v\right\| \\
    & \le \frac{a_i-a_{i-1}}{a_i}\left(1+\alpha a_{i-1}\left(\|u\| + 
        \|v\|\right)\right)\|v-u\|,
  \end{align*}
by (\ref{e:smoothness}).  Substituting into (\ref{e:far}), and
using~\eqref{equation:y-xvsv-u} together with the assumption that
$\|y-x\|>a_i-a_{i-1}$,
  \begin{align*}
    \lefteqn{\left\|h_i(y)-y-(h_i(x)-x)\right\|} & \\
      & \le \frac{a_i-a_{i-1}}{a_i}\left(LM +
    M\left(1+a_{i-1}\alpha (\|u\| + \|v\|)\right)
        +\frac{\alpha}{2}\left(\|v\|^2+\|u\|^2\right)\right)
        \|y-x\| \\
      & \le \frac{a_i-a_{i-1}}{a_i}\left(M(L+1+2R\alpha)
        +\alpha R^2\right) \|y-x\|.
  \end{align*}
Combining with (\ref{e:close}),
it suffices to choose $a_1$ to satisfy
$
    a_1\le \frac{\epsilon}{2\alpha R},
$
and $a_2,\ldots,a_{m-1}$ to satisfy, for $i>1$,
$
    \frac{a_i-a_{i-1}}{a_i} \le \frac{\epsilon}{B},
$
where
  \begin{align*}
    B & = \max\left\{ \alpha M(R + M),
    M(L+1+2R\alpha)+\alpha R^2\right\}.
  \end{align*}
If we choose $0<c<1$ and set $a_i=(1-c)^{m-i}$ for $i=1,\ldots,m$,
then these conditions are equivalent to
$c \le \epsilon/B$ and
  \begin{align*}
    &(1-c)^{m-1} \le \frac{\epsilon}{2\alpha R} 
  \qquad  \Leftrightarrow \qquad
     1-\left(\frac{\epsilon}{2\alpha R}\right)^{1/(m-1)} \le c.
  \end{align*}
Thus, it suffices if
  \begin{align*}
&
    \frac{\epsilon}{B}
    \ge
    1-\left(\frac{\epsilon}{2\alpha R}\right)^{1/(m-1)}   
   \qquad \Leftarrow \qquad
    \epsilon \ge \frac{B}{m-1}\ln
    \frac{2\alpha R}{\epsilon} \\
&
   \Leftarrow \qquad
    \epsilon \ge \frac{B}{m-1}\max\left\{1,
    \ln \frac{2\alpha R m}{B}\right\} 
\qquad  \Leftarrow \qquad
    \epsilon \ge \frac{B\ln 2m}{m-1},
  \end{align*}
using the inequality $1-x\le\ln (1/x)$,
which follows from convexity of $\ln(1/x)$.
\end{proof}

\section{Zero Fr\'echet derivatives with deep compositions}

The following theorem is the main result of this section. It shows
that if
a composition of near-identity maps has zero Fr\'echet derivatives
of a quadratic criterion
with respect to the functions in the composition, then the composition
minimizes that criterion. That is, all critical points of this kind are
global minimizers; there are no saddle points or suboptimal local
minimizers in the near-identity region.

\begin{theorem}\label{theorem:optimization}
  Consider a distribution $P$ on $\calX\times\calX$, and
  define the criterion
\[
      Q(h) = \frac{1}{2}\Expect_{(X,Y)\sim P}\left\|h(X)-Y\right\|_2^2.
\]
  Define a conditional expectation $h^*(x)=\Expect[Y|X=x]$, so that
  $h^*$ minimizes $Q$.
  Consider the function computed by an $m$-layer network
  $h = h_m\circ \cdots \circ h_1$, and suppose that, for some
  $0<\epsilon<1$ and all $i$, $h_i$ is differentiable,
  $\|h_i\|<\infty$, and $\left\|h_i-\Id\right\|_L\le \epsilon$.
  Suppose that $\|h-h^*\|<\infty$. Then for all $i$,
    \[
      \inf_{\Delta\in B_1}
      D_{h_i}Q(h)(\Delta) \le \frac{- (1-\epsilon)^{m-1}}{\|h-h^*\|}
        \left(Q(h) - Q(h^*)\right).
    \]
  Thus, if $h$ is a critical point of $Q$,
  that is, for all $i$, $D_{h_i}Q(h) = 0$,
  we must have $Q(h) = Q(h^*)$.
\end{theorem}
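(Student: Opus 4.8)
The plan is to compute the Fr\'echet derivative $D_{h_i}Q(h)$ in closed form, then exhibit a single perturbation $\Delta$ lying in the unit ball $B_1$ whose image under this derivative is a negative multiple of $Q(h)-Q(h^*)$; the stated inequality then follows by choosing the multiple as large as the constraint $\|\Delta\|\le 1$ allows, and the final sentence is immediate, since a critical point makes the left-hand side $0$ while always $Q(h)\ge Q(h^*)$.

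First I would record the bias--variance identity: conditioning on $X$ and using $h^*(X)=\E[Y\mid X]$ gives $Q(h)=Q(h^*)+\tfrac12\,\E\|h(X)-h^*(X)\|_2^2$. Write $\phi_i:=h_{i-1}\circ\cdots\circ h_1$ (with $\phi_1:=\Id$) and $\psi_i:=h_m\circ\cdots\circ h_{i+1}$ (with $\psi_m:=\Id$), so $h=\psi_i\circ h_i\circ\phi_i$. Expanding $\psi_i\bigl(h_i(\phi_i(x))+\Delta(\phi_i(x))\bigr)$ to first order and conditioning on $X$ as above yields
\[
  D_{h_i}Q(h)(\Delta)=\E\bigl\langle h(X)-h^*(X),\; D\psi_i\!\bigl(h_i(\phi_i(X))\bigr)\,\Delta(\phi_i(X))\bigr\rangle .
\]
To make this rigorous I would take the $o(\Delta)$ term in the expansion of $\psi_i$ to be uniform over the compact set $h_i(\phi_i(\calX))$ (using continuity of $D\psi_i$), and fix the Banach space of perturbations precisely enough that $D_{h_i}Q(h)$ exists as a bounded operator.

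Next I would collect the consequences of $\|h_j-\Id\|_L\le\epsilon<1$. A Lipschitz-seminorm bound on a differentiable map transfers to its Jacobian (Lemma~\ref{lemma:Lipschitz}), so $\|Dh_j(\cdot)-I\|\le\epsilon$ and hence $Dh_j(\cdot)$ is invertible with $\|Dh_j(\cdot)^{-1}\|\le(1-\epsilon)^{-1}$; by the chain rule $\|D\psi_i(\cdot)^{-1}\|\le(1-\epsilon)^{-(m-i)}$. Moreover, since $\|h_i\|<\infty$ and $h_i$ is continuous at $0$, we must have $h_i(0)=0$, so $\|h_j(a)\|=\|h_j(a)-h_j(0)\|\ge(1-\epsilon)\|a\|$; composing, $\phi_i$ is injective on $\calX$ and $\|\phi_i(x)\|\ge(1-\epsilon)^{i-1}\|x\|$, i.e.\ $\|\phi_i^{-1}(z)\|\le(1-\epsilon)^{-(i-1)}\|z\|$ on $\phi_i(\calX)$. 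Now set, for $z\in\phi_i(\calX)$,
\[
  \Delta(z):=-\lambda\,D\psi_i\!\bigl(h_i(z)\bigr)^{-1}\bigl(h(\phi_i^{-1}(z))-h^*(\phi_i^{-1}(z))\bigr),
  \qquad \lambda:=\frac{(1-\epsilon)^{m-1}}{\|h-h^*\|},
\]
extended to $\Re^d$ without increasing $\|\Delta\|$ (legitimate because $D_{h_i}Q(h)(\Delta)$ depends only on $\Delta|_{\phi_i(\calX)}$). Substituting $z=\phi_i(X)$ gives $D\psi_i(h_i(\phi_i(X)))\Delta(\phi_i(X))=-\lambda\bigl(h(X)-h^*(X)\bigr)$, so $D_{h_i}Q(h)(\Delta)=-\lambda\,\E\|h(X)-h^*(X)\|_2^2=-2\lambda\bigl(Q(h)-Q(h^*)\bigr)$; and combining the two displayed geometric bounds with $\|h(x)-h^*(x)\|\le\|h-h^*\|\,\|x\|$ gives $\|\Delta(z)\|\le\lambda(1-\epsilon)^{-(m-1)}\|h-h^*\|\,\|z\|=\|z\|$, so $\Delta\in B_1$. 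This already implies the claimed inequality (in fact with an extra factor $2$); and if $D_{h_i}Q(h)=0$ then $0\le-\lambda(Q(h)-Q(h^*))$, forcing $Q(h)=Q(h^*)$ (the case $\|h-h^*\|=0$ being trivial).

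The step I expect to be the crux is the norm bound $\Delta\in B_1$: the exponent $m-1$ is exactly the sum of the $m-i$ contributed by invertibility of $D\psi_i$ and the $i-1$ contributed by expansiveness of $\phi_i$, and both rest on the slightly delicate points that a Lipschitz bound on $h_j-\Id$ controls its Jacobian and that finiteness of $\|h_i\|$ forces $h_i(0)=0$ (which is also what makes $\|h-h^*\|<\infty$ consistent). The rest — uniformity of the first-order expansion and the choice of perturbation space — is routine bookkeeping that nonetheless should be set up carefully.
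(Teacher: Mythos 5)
Your proposal is correct and tracks the paper's proof closely: the same bias--variance decomposition, the same invertibility/near-isometry bounds obtained from $\|h_j-\Id\|_L\le\epsilon$, and essentially the same witness $\Delta$ obtained by pulling $h^*-h$ back through the chain-rule factors and the inner composition. The one genuine presentational difference is that the paper keeps the Fr\'echet derivatives $D H_j(g)$ of the composition functionals $H_j(g)=h_j\circ g$ abstract --- Lemma~\ref{lemma:Lipschitz}, Part~\ref{lemma:part:deriv} gives $\|D H_j(g)-\Id\|\le\epsilon$ without ever computing what $D H_j(g)$ is, and $\Delta$ is then defined as an operator preimage --- whereas you identify $D H_j(g)$ explicitly as pointwise multiplication by the Jacobian $Dh_j(g(\cdot))$ and phrase everything through matrix inverses $D\psi_i(\cdot)^{-1}$. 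That pointwise Jacobian bound $\|Dh_j(\cdot)-I\|\le\epsilon$ is a standard consequence of the Lipschitz hypothesis plus differentiability, but it is not what Lemma~\ref{lemma:Lipschitz} actually asserts (that lemma is about the operator $D H_j$, not the Jacobian $Dh_j$), so you should prove or cite it separately rather than attributing it to that lemma. A small strength of your write-up is that you make explicit the observation that $\|h_i\|<\infty$ together with continuity at the origin forces $h_i(0)=0$, which is what legitimizes the expansiveness bound $\|\phi_i(x)\|\ge(1-\epsilon)^{i-1}\|x\|$; the paper uses this silently in its final chain of inequalities. Both arguments deliver the extra factor of $2$ you noticed, which the stated theorem discards.
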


The theorem defines the expected quadratic loss under an arbitrary
joint distribution, but in particular it could be a discrete distribution
that is uniform on a training set.

Notice that if $h^*$ satisfies the properties of
Theorem~\ref{theorem:main1}, then it can be represented as a
composition of $h_i$ with the required properties.
If it cannot, then the theorem shows that the near-identity
region will not contain critical points. The only
property we require of $h^*$ is the boundedness condition
$\|h-h^*\|<\infty$. From the definition of the induced norm,
this implicitly assumes that $h(0)=h^*(0)$ and
that $h^*$ is differentiable at $0$. In the context of learning
embeddings, it seems reasonable to fix the embedding's value at one
input vector, and express its value elsewhere relative to that
value.

Notice also that, although the theorem requires differentiability of
the $h_i$, it is only important for various derivatives to be defined.
In particular, a network with non-differentiable but Lipschitz
activation functions, like a ReLU network, could be approximated to
arbitrary accuracy by replacing the ReLU nonlinearity with a
differentiable one. The conclusions of the theorem apply to any
critical point at a differentiable approximation of the ReLU
network.

\begin{lemma}\label{lemma:Lipschitz}
  Suppose $\left\|f-\Id\right\|_L\le\alpha<1$.
  \begin{enumerate}
    \item\label{lemma:part:isometry}
    $(1-\alpha)\|x-y\|\le\|f(x)-f(y)\| \le (1+\alpha)\|x-y\|$.
    \item\label{lemma:part:inverse}
    $f$ is invertible and
    $\left\|f^{-1}-\Id\right\|_L\le\alpha/(1-\alpha)$.
    \item\label{lemma:part:deriv}
    For $F(g)=f\circ g$, $\|D F(g)-\Id\|\le\alpha$, and hence
    $\left\|D F(g)-\Id\right\|_L\le\alpha$.
  \end{enumerate}
\end{lemma}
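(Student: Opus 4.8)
The plan is to prove Lemma~\ref{lemma:Lipschitz} part by part, since each claim follows from elementary estimates once the near-identity hypothesis $\|f-\Id\|_L\le\alpha<1$ is unpacked.

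\textbf{Part~\ref{lemma:part:isometry}.} Write $f = \Id + (f-\Id)$ and apply the triangle inequality in both directions. For the upper bound, $\|f(x)-f(y)\|\le\|x-y\|+\|(f-\Id)(x)-(f-\Id)(y)\|\le(1+\alpha)\|x-y\|$ directly from the Lipschitz seminorm bound. For the lower bound, $\|f(x)-f(y)\|\ge\|x-y\|-\|(f-\Id)(x)-(f-\Id)(y)\|\ge(1-\alpha)\|x-y\|$; here $\alpha<1$ guarantees the right-hand side is nonnegative and that $f$ is injective.

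\textbf{Part~\ref{lemma:part:inverse}.} Injectivity is immediate from the lower bound in Part~\ref{lemma:part:isometry}. Surjectivity onto its range is a definitional matter, but to get a genuine inverse on $\R^d$ one invokes a fixed-point/degree argument: the map $y\mapsto x - (f(x)-y)\cdot 0$\,---\,more precisely, for fixed $y$ the map $x\mapsto x-(f(x)-y)$ is a contraction with constant $\alpha$, so it has a unique fixed point, which is $f^{-1}(y)$; this is the standard perturbation-of-identity argument and I would state it briefly. For the seminorm bound, let $u=f^{-1}(x)$, $v=f^{-1}(y)$, so $x=f(u)$, $y=f(v)$. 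Then
\[
  \|f^{-1}(x)-\Id(x)-(f^{-1}(y)-\Id(y))\| = \|(u-v)-(f(u)-f(v))\| = \|(f-\Id)(u)-(f-\Id)(v)\|\le\alpha\|u-v\|,
\]
and by the lower bound of Part~\ref{lemma:part:isometry}, $\|u-v\|\le\frac{1}{1-\alpha}\|x-y\|$, giving $\|f^{-1}-\Id\|_L\le\alpha/(1-\alpha)$.

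\textbf{Part~\ref{lemma:part:deriv}.} Here $F(g)=f\circ g$ acts on functions, and $DF(g)$ is its Fr\'echet derivative. I would first identify $DF(g)$ explicitly: for a perturbation $\Delta$, $F(g+\Delta)(x)-F(g)(x) = f(g(x)+\Delta(x))-f(g(x))$, and since $f$ is differentiable this equals $Df(g(x))\Delta(x) + o(\Delta(x))$ pointwise; assembling, $DF(g)(\Delta)(x) = Df(g(x))\Delta(x)$, with the remainder controlled uniformly because $\|f-\Id\|_L<\infty$ bounds $\|Df\|$. Then $(DF(g)-\Id)(\Delta)(x) = (Df(g(x))-I)\Delta(x)$. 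The hypothesis $\|f-\Id\|_L\le\alpha$ implies $\|Df(z)-I\|\le\alpha$ for every $z$ (the Jacobian of $f-\Id$ has operator norm bounded by the Lipschitz constant, a fact I would note follows from differentiating along line segments), hence $\|(DF(g)-\Id)(\Delta)\|\le\alpha\|\Delta\|$ in the induced norm, giving $\|DF(g)-\Id\|\le\alpha$. The Lipschitz-seminorm consequence $\|DF(g)-\Id\|_L\le\alpha$ follows since $DF(g)-\Id$ is a \emph{linear} operator, for which the induced norm and the Lipschitz seminorm coincide.

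The only genuine subtlety is the interchange of ``Lipschitz constant of $f-\Id$'' and ``operator-norm bound on $Df-I$'' in Part~\ref{lemma:part:deriv}, and the uniform control of the Fr\'echet remainder; both are routine but worth a sentence. Everything else is triangle inequalities and the contraction mapping principle.
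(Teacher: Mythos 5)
Parts~\ref{lemma:part:isometry} and~\ref{lemma:part:inverse} of your proof are essentially the same as the paper's: triangle inequality for the two-sided bound, and the substitution $u=f^{-1}(x)$, $v=f^{-1}(y)$ combined with the lower bound from Part~\ref{lemma:part:isometry} for the seminorm estimate. Your contraction-mapping aside for surjectivity of $f$ is a reasonable clarification (the paper asserts invertibility from the lower bound, which by itself gives only injectivity).

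Part~\ref{lemma:part:deriv} is where you diverge from the paper, and there is a genuine gap. You try to \emph{identify} $DF(g)$ explicitly as $\Delta\mapsto (Df\circ g)\cdot\Delta$. That route needs (i) differentiability of $f$, which is not a hypothesis of the lemma, and (ii) a proof that the pointwise Taylor remainders $f(g(x)+\Delta(x))-f(g(x))-Df(g(x))\Delta(x)$ are $o(\|\Delta\|)$ \emph{uniformly} after dividing by $\|x\|$ and taking the sup. Your justification for (ii) --- ``the remainder controlled uniformly because $\|f-\Id\|_L<\infty$ bounds $\|Df\|$'' --- does not work: a bound on $\|Df\|$ bounds the first-order term, not the rate at which the remainder vanishes; for uniform control you would need something like uniform continuity of $Df$, which again is not assumed. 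The paper avoids both problems by never computing $DF(g)$. It takes $DF(g)$ as given (the lemma is a conditional statement about $DF(g)$ assuming it exists) and bounds $\|\Delta-DF(g)(\Delta)\|$ directly by inserting and subtracting $F(g+\Delta)$, $F(g)$, and $g$, which splits the quantity into the Fr\'echet remainder $F(g+\Delta)-F(g)-DF(g)(\Delta)$ (which is $o(\|\Delta\|)$ by the \emph{definition} of the derivative) plus the term $f\circ(g+\Delta)-(g+\Delta)-(f\circ g-g)$, which is $\le\alpha\|\Delta\|$ purely from the Lipschitz bound on $f-\Id$. That argument uses only the lemma's stated hypothesis. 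Your final observation --- that a linear operator's Lipschitz seminorm equals its induced norm, so $\|DF(g)-\Id\|\le\alpha$ gives $\|DF(g)-\Id\|_L\le\alpha$ --- matches the paper and is correct.
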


\begin{proof}
Part~\ref{lemma:part:isometry}:
The triangle inequality and the Lipschitz property gives
  \begin{align*}
    \|x-y\|
        & \le \|f(x)-f(y)\| + \|f(x)-x - (f(y)-y)\| 
        \le \|f(x)-f(y)\| + \alpha\|x - y\|.
  \end{align*}
Similarly,
  \begin{align*}
    \|f(x)-f(y)\|
        & \le \|x-y\| + \|f(x)-x - (f(y)-y)\| 
        \le (1+\alpha)\|x - y\|.
  \end{align*}
Part~\ref{lemma:part:inverse}:
For $\alpha<1$, the inequality
$\|f(x)-f(y)\|\ge (1-\alpha)\|x-y\|$
of Part~\ref{lemma:part:isometry} shows that $f$ is invertible.
Together with the Lipschitz property, this also shows that
  \begin{align*}
    \|x-y-(f(x)-f(y))\|
        & \le \alpha \|x-y\| 
        \le \frac{\alpha}{1-\alpha} \|f(x)-f(y)\|,
  \end{align*}
which, since $(f^{-1}-\Id) (f(x)) = x - f(x)$, gives
$\left\|f^{-1}-\Id\right\|_L\le\alpha/(1-\alpha)$.\\
Part~\ref{lemma:part:deriv}:
From the definition of 
$D F(g)$,
  $
    \lim_{\Delta\to 0}
        \left\|F(g+\Delta)-F(g)-D F(g)(\Delta)\right\|/
        \|\Delta\| = 0.
  $
We can write, for any $\Delta$ with $\|\Delta\|<\infty$,
  \begin{align*}
    \left\| \Delta - D F(g)(\Delta)\right\| 
      & = \lefteqn{\left\| \Delta 
              + F(g+\Delta) - F(g+\Delta) + F(g) - F(g) + g - g
                    - D F(g)(\Delta)\right\|} \\
      & \le \left\|F(g+\Delta)-F(g)-D F(g)(\Delta)\right\| \\*
      & \qquad {}
        + \left\|f\circ(g+\Delta) - (g+\Delta) - (f\circ g-g) \right\| \\
      & = o\left(\|\Delta\|\right)
        + \sup_x \frac{\left\|f\circ(g+\Delta)(x) - (g+\Delta)(x)
          - (f\circ g-g)(x) \right\|}{\|x\|} \\
      & = o\left(\|\Delta\|\right)
        + \alpha\sup_x \frac{\left\|\Delta(x)\right\|}{\|x\|} 
      = o\left(\|\Delta\|\right)
        + \alpha\|\Delta\|.
  \end{align*}
Hence, $\|D F(g)-\Id\|\le\alpha$.
Since $\left(D F(g)-\Id\right)$ is a linear functional,
this also shows that it is $\alpha$-Lipschitz:
  \begin{align*}
    \left\|D F(g)(\Delta_1)-\Delta_1 - \left(D F(g)(\Delta_2) -
    \Delta_2\right)\right\| 
      = \left\|D F(g)(\Delta_1-\Delta_2) - (\Delta_1-\Delta_2)\right\|
     \le \alpha\|\Delta_1-\Delta_2\|.
  \end{align*}
\end{proof}

\begin{proof} {\bf (of Theorem~\ref{theorem:optimization})}
  From the projection theorem,
    \begin{align*}
      Q(h)
        & = \frac{1}{2}\Expect_{(X,Y)\sim P}\left\|h(X)-Y\right\|_2^2 \\
        & = \frac{1}{2}\Expect\left\|h(X)-h^*(X)\right\|_2^2 + 
      \frac{1}{2}\Expect_{(X,Y)\sim P}\left\|h^*(X)-Y\right\|_2^2.
    \end{align*}
  Fix $1\le i\le m$.  
  To analyze the effect of
  changing the function $h_i$ on $Q(h)$
  by applying the chain rule
  for Fr\'echet derivatives,
  we trace the effect of changing $h_i$ on $h$ by describing $h$ as the
  result of the composition of a sequence of functionals, which
  map functions to functions.
  In particular, we write
    $
      h = H_m\circ\cdots\circ H_{i+1}\circ G_i(h_i),
    $
  where $H_j(g) := h_j\circ g$ for $i<j\le m$,
  $h_i^j:=h_j\circ\cdots \circ h_i$ for $i\le j\le m$,
  and $G_i(g)=g\circ h_{i-1}\circ\cdots\circ h_1$.
  Now, using the chain rule for Fr\'echet derivatives,
  \begin{align*}
    D_{h_i}Q(h)
      & = \Expect\left[ (h(X)-h^*(X))\cdot \ev_X\circ D_{h_i}h\right] \\
      & = \Expect\left[ (h(X)-h^*(X))\cdot \ev_X\circ
      D H_m(h_i^{m-1})\circ\cdots\circ
      D H_{i+1}(h_i^i)\circ D G_i(h_i)\right],
  \end{align*}
  where $\ev_x$ is the evaluation functional, 
  $\ev_x (f):= f(x)$.
  From the definition of the Fr\'echet derivative, 
  $D G_i(g)$ always satisfies
  \begin{align}
\nonumber
   0 
   & =  \lim_{\Delta\to 0} \frac{\left\|G_i(g+\Delta)-G_i(g)
                                  -D G_i(g)(\Delta)\right\|}
                 {\|\Delta\|}  \\
\nonumber
   & =  \lim_{\Delta\to 0} \frac{\left\|
(g+\Delta)\circ h_{i-1}\circ\cdots\circ h_1
            -g\circ h_{i-1}\circ\cdots\circ h_1
                                  -D G_i(g)(\Delta)\right\|}
                 {\|\Delta\|}  \\
\label{e:limit}
   & =  \lim_{\Delta\to 0} \frac{\left\|
    \Delta \circ h_{i-1}\circ\cdots\circ h_1
                                  -D G_i(g)(\Delta)\right\|}
                 {\|\Delta\|}.
  \end{align}
The definition of the Fr\'echet derivative also
implies that $D G_i(g)$ is linear, as is the functional
$F$ defined by $F(\Delta) = \Delta \circ h_{i-1}\circ\cdots\circ h_1$.
If $F$ and $D G_i(g)$ were unequal, progressively scaling down an input
on which they differ would scale down the difference by the same amount,
contradicting~\eqref{e:limit}.  Thus $F = D G_i(g)$,
%
  which in turn implies
  \begin{align*}
    D_{h_i}Q(h)(\Delta)
      & = \Expect \left[(h(X)-h^*(X)) \cdot \ev_X\circ
      {} D H_m(h_i^{m-1})\circ\cdots\circ
      D H_{i+1}(h_i^i)\circ\Delta\circ h_{i-1}\circ\cdots\circ
      h_1 \right] \\
      & = \Expect \left[(h(X)-h^*(X)) 
      {} \cdot D H_m(h_i^{m-1})\circ\cdots\circ
      D H_{i+1}(h_i^i)\circ\Delta\circ h_{i-1}\circ\cdots\circ
      h_1(X)\right]. \\
  \end{align*}
  For all $j$, since $\left(h_j-\Id\right)$ is $\epsilon$-Lipschitz,
  Lemma~\ref{lemma:Lipschitz} implies $h_j$ is invertible.
  The lemma also implies that, for all $j$,
  $\left(DH_j(h_i^{j-1})-\Id\right)$ is $\epsilon$-Lipschitz,
  and hence that $DH_j(h_i^{j-1})$ is also invertible.
  Because these inverses exist, we can define
    \[
      \Delta = c \left(D H_m(h_i^{m-1})\circ\cdots\circ
      D H_{i+1}(h_i^i)\right)^{-1}\circ (h^*-h) \circ 
        \left(h_{i-1}\circ\cdots\circ h_1\right)^{-1},
    \]
  where we pick the scalar $c>0$ so that $\|\Delta\|=1$.
  This choice ensures that
  \begin{align}
      D H_m(h_i^{m-1})\circ\cdots\circ
      D H_{i+1}(h_i^i)\circ\Delta\circ h_{i-1}\circ\cdots\circ
      h_1 = c(h^*-h),
    \label{equation:Deltadef}
  \end{align}
  and hence
$
    D_{h_i}Q(h)(\Delta)
      = -c\Expect \|h(X)-h^*(X)\|_2^2.
$
  Since $\|\Delta\|=1$, for all $\gamma>0$ there is a $y$ with
  $\|\Delta(y)\|\ge(1-\gamma)\|y\|$. Define
 $
      x=(h_{i-1}\circ\cdots \circ h_1)^{-1}(y).
$
  Then, using the definition of the induced norm and
  Equation~\eqref{equation:Deltadef},
  we have
    \begin{align*}
      c\|h-h^*\|
         \ge c\frac{\|h(x)-h^*(x)\|}{\|x\|} 
      = \frac{1}{\|x\|}\left\|D H_m(h_i^{m-1})\circ\cdots\circ
        D H_{i+1}(h_i^i)\circ\Delta(y)\right\|.
    \end{align*}
  Recalling that all $\left(D H_j(h_i^j)-\Id\right)$ and
  $\left(h_j-\Id\right)$ are $\epsilon$-Lipschitz, we can apply
  Lemma~\ref{lemma:Lipschitz}:
    \begin{align*}
      c\|h-h^*\|
        & \ge (1-\epsilon)^{m-i}\frac{\|\Delta(y)\|}{\|x\|} \\
        & \ge (1-\epsilon)^{m-i}(1-\gamma)\frac{\|y\|}{\|x\|} \\
        & = (1-\epsilon)^{m-i}(1-\gamma)\frac{\|h_{i-1}\circ\cdots
          \circ h_1(x)\|}{\|x\|} \\
        & \ge (1-\epsilon)^{m-1}(1-\gamma).
    \end{align*}
  Taking the limit as $\gamma\to 0$ implies the result.
\end{proof}

\section{Bad critical points for sigmoid residual nets}
\label{s:stuck}

Theorem~\ref{theorem:optimization} may be paraphrased to say that
residual nets cannot have any bad critical points in the near-identity
region, when we consider Fr\'echet derivatives.  In this section, we
show that when we consider gradients with respect to the parameters of
a fixed-size residual network with sigmoid activation functions, the
corresponding statement is not true.

For a depth $m$, width $d$ and size $k$, the 
{\em $(m,d,k)$ $\tanh$ residual network} $N$ with parameters
$\theta=(A_1,..., A_m,B_1,...,B_m)$
computes the function $h_\theta \eqdef h_m \circ ... \circ h_1$, where each 
layer $h_i$ is defined by $h_i(x) = A_i \tanh (B_i x) + x$, with
$A_1,..., A_m\in\Re^{d\times k}$ and $B_1,...,B_m\in\Re^{k\times d}$,
and we define $\tanh$ of a vector as the component-wise application of
$\tanh$.

To gain an intuitive understanding of the existence of suboptimal
critical points, consider the following two properties of networks
with $\tanh$ nonlinearities.  First, there are finitely many simple
transformations (such as permutations of hidden units, or negation of
the input and output parameters of a unit) that leave the network
function unchanged. Second, apart from these transformations,
two networks with different parameter values compute different functions.
(This was shown for generic parameter values and $\tanh$ networks of
arbitrary depth by 
\citet{fefferman1994reconstructing}, and 
improved by 
\citet{albertini1992neural} for
the special case of two-layer networks.) Then for any globally optimal
parameter value, there is a simple transformation that is also
globally optimal. Consider a path between these two parameter values
that minimizes the maximum value of the criterion along the path. 
(It is not hard to construct a scenario in which such a minimax path
exists.)
The
maximizer must be a suboptimal critical point. The proof we give of the
following theorem is more direct, relying on specific properties of
the $\tanh$ parameterization, but we should expect a similar
result to apply to networks with other nonlinearities and
parameterizations, provided 
functions have multiple isolated distinct
representations as in the case of $\tanh$ networks.

The proof leverages the fact
that, while Theorem~\ref{theorem:optimization} rules out the
possibility of bad critical points arising from interactions between
the layers $h_1,...,h_m$, they may still arise due to the dynamics of
training an individual $h_i$.

\begin{theorem}
\label{t:stuck}
For any $\epsilon > 0$, any dimension $d$, width $k$ and a depth $m$, for
all $(m,d,k)$ tanh residual networks $N^*$ that do not compute the
identity function,
there is an $R > 0$, 
and a joint distribution $P$, 
over $B_R(\Re^d) \times B_R(\Re^d)$, such that
$N^*$ has parameter $\theta^*$ that minimizes
$Q(\theta) =
 \frac{1}{2}\Expect_{(X,Y)\sim P}\left\|h_{\theta}(X)-Y\right\|_2^2$,
and
the layers $h^*_i$ of $N^*$ satisfy
  $\|h^*_i-\Id\|_L\le\epsilon$,
and there is a (m,d,k) tanh residual network
$N$ 
such that
$N$ has parameter $\theta$ that is a critical point for $Q$,
$N$ has layers $h_i$ that satisfy $\|h_i - \Id\|_L\le\|h^*_i-\Id\|_L$,
but $Q(\theta) > Q(\theta^*)$.
\end{theorem}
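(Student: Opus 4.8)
The plan is to take the \emph{all-zeros} parameter as the suboptimal critical point. Fix a near-identity $\tanh$ residual network $N^*$, with layers $h^*_i(x)=A^*_i\tanh(B^*_i x)+x$ satisfying $\|h^*_i-\Id\|_L\le\epsilon$ for every $i$ (this holds, e.g., whenever the weight factors are small, since $\|h^*_i-\Id\|_L=\sup_x\|A^*_i\,\mathrm{diag}(\mathrm{sech}^2(B^*_i x))\,B^*_i\|\le\|A^*_i\|\,\|B^*_i\|$), and suppose $N^*$ does not compute the identity map. Because $h_{\theta^*}$ is continuous and not the identity, there is a point $x_1$ (indeed a whole ball) on which $h_{\theta^*}\neq\Id$; choose $R$ large enough that $x_1$ and $h_{\theta^*}(x_1)$ both lie in $B_R(\Re^d)$, and let $P$ be the point mass at $(x_1,h_{\theta^*}(x_1))$ — or, if a non-degenerate law is preferred, the image of the uniform distribution on a small ball around $x_1$, enlarging $R$ to cover the image. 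Then $Q(\theta)=\tfrac12\|h_\theta(x_1)-h_{\theta^*}(x_1)\|_2^2\ge0$ with equality at $\theta^*$, so $\theta^*$ minimizes $Q$; this gives the first two claimed properties.

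For $N$ I take the parameter $\theta_0$ with all $A_i$ and $B_i$ equal to zero, so that every layer equals $\Id$, $h_{\theta_0}=\Id$, and hence $\|h_i-\Id\|_L=0\le\|h^*_i-\Id\|_L$. The key point is that $\theta_0$ is automatically a critical point of $Q$ for \emph{every} $P$. Differentiating $h_\theta=h_m\circ\cdots\circ h_1$ by the chain rule, every partial derivative with respect to an entry of $A_i$ carries a factor $\tanh(B_i z)$, where $z$ is the input to layer $i$, and this equals $\tanh 0=0$ at $\theta_0$; every partial derivative with respect to an entry of $B_i$ carries a factor $A_i$, which is $0$ at $\theta_0$. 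Hence $\nabla_\theta h_\theta\equiv 0$ at $\theta_0$, so $\nabla_\theta Q(\theta_0)=\Expect\bigl[(h_{\theta_0}(X)-Y)^{\top}\nabla_\theta h_\theta(X)\bigr]=0$. Finally $Q(\theta_0)=\tfrac12\|x_1-h_{\theta^*}(x_1)\|_2^2>0=Q(\theta^*)$, so $\theta_0$ is a critical point strictly worse than the global optimum, which completes the argument.

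Conceptually this is the streamlined form of the symmetry picture sketched before the theorem: since $\tanh$ is odd, $\theta^*$ and its sign-flipped twin $\widetilde\theta^*$ (negate every $A_i$ and $B_i$) compute the same function and are both global minimizers, and the straight segment joining them passes through $\theta_0$, whose criticality makes it the degenerate ``minimax path'' maximizer. The only genuine work is making the vanishing-gradient claim precise — tracking the common vanishing factor ($\tanh 0=0$ in the $A_i$-derivatives, $A_i=0$ in the $B_i$-derivatives) through all $m$ compositions, and, if $P$ is taken non-degenerate, checking that it can be supported inside $B_R\times B_R$ while keeping $Q(\Id)>0$; neither is a real obstacle. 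If one instead wanted a critical point away from the origin, the fallback is the honest minimax-path argument: show a taut path from $\theta^*$ to $\widetilde\theta^*$ exists (compactness, once $P$ has bounded support) and argue its maximizer is a suboptimal critical point, using that by the reconstruction theorem of \citet{fefferman1994reconstructing} the only parameters on the path computing $h_{\theta^*}$ are $\theta^*$ and $\widetilde\theta^*$ themselves.
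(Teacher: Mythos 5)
Your proof is correct and uses the same central device as the paper: take $N$ to be the all-zero parameter network, so that $h_{\theta_0}=\Id$, and show this is a suboptimal critical point. The one place you deviate is in verifying criticality, and your version is actually cleaner: you observe that \emph{every} partial derivative $\partial h_\theta(x)/\partial w$ vanishes pointwise at $\theta_0$ (the $A_i$-derivatives carry a $\tanh(0)=0$ factor, the $B_i$-derivatives carry a factor of $A_i=0$), which immediately gives $\nabla_\theta Q(\theta_0)=0$ for \emph{any} choice of $P$. The paper instead argues through a backpropagation decomposition and concludes $\partial Q_{(x,y)}/\partial w$ is proportional to an input component $u$, then imposes $\Expect(x)=0$ on $P$ so that $\Expect(u)=0$; this extra requirement on $P$ is superfluous once one notices the pointwise vanishing, which is why your point-mass distribution works while the paper's construction goes out of its way to use a symmetric one. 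Both arrive at $Q(\theta_0)=\tfrac12\Expect\|X-Y\|^2>0=Q(\theta^*)$ the same way.
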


\begin{proof}
Let $A_1,..., A_m, B_1,...,B_m$ be the parameters of $N^*$
and define $f_i$ by
$f_i(x) = A_i \tanh(B_i x) + x$, and $f_{N^*} = f_m \circ ... \circ f_1$.


Let $P$ be any joint distribution over examples
$(x,y)$ such that
\begin{itemize}
\item $\E(x) = 0$, 
\item $y = f_{N^*}(x)$ with probability $1$, and
\item $\E(|| y - x ||^2) > 0$.
\end{itemize}

Let $N$ be the network with all-zero parameters.  We claim
that $N$ a saddle point of $Q$.  Choose a weight $w$ in $N$,
between nodes $u$ and $v$.  Let $\net$
be presquashed linear combination of the nodes providing an input
to $v$, so that $v = \tanh(\net)$.  For a particular
$(x,y)$, if we define $Q_{(x,y)}(N) = (f_{N} (x) - y)^2$, then
\[
\frac{ \partial Q}{\partial w} 
  = \E_{(x,y) \sim P} \left(\frac{ \partial Q_{(x,y)}}{\partial w}\right).
\]
Furthermore,
\[
\frac{ \partial Q_{(x,y)}}{\partial w}
 = \frac{ \partial Q_{(x,y)}}{\partial v} 
   \frac{ \partial v}{\partial \net }
   \frac{ \partial \net}{\partial w}.
\]
If all of the weights are zero, however,
$\frac{ \partial Q_{(x,y)}}{\partial v}$ and 
$\frac{ \partial v}{\partial \net}$ 
do not depend on the input, so that 
$\frac{ \partial Q_{(x,y)}}{\partial w}$ is proportional to 
$\frac{ \partial \net}{\partial w} = u$.  However, again, since all
of the weights are zero, by induction, there is a component $x$ of
the input such that $u = x$, which implies $\E(u) = 0$.  Since
$\frac{ \partial Q_{(x,y)}}{\partial v}$ 
and $\frac{ \partial v}{\partial \net }$ are constant, and 
$\E\left(\frac{ \partial \net}{\partial w}\right) = 0$, we have
\[
\frac{ \partial Q}{\partial w} 
  = \E_{(x,y) \sim P} \left(\frac{ \partial Q_{(x,y)}}{\partial w}\right) = 0,
\]
and therefore $N$ is a critical point.

It remains to show that $Q(N) > Q(N^*)$.
Since all of the weights
of $N$ are $0$, $N$ computes the identity function.  Since
$\E(|| y - x ||^2) > 0$, this means $Q(N) > 0 = Q(N^*)$.  
\end{proof}

\section{Discussion}
\label{s:discussion}

Informally, Theorem~\ref{theorem:optimization} says that, if
near-identity behavior on each layer is maintained, for example
through regularization or early stopping, optimization with deep
residual nets using gradient descent cannot get stuck due to
interactions between the layers.

Theorem~\ref{theorem:optimization} also has consequences for
algorithms that optimize residual networks with a countably infinite
number of parameters on each layer, for instance by sequentially
adding units. Such algorithms have been studied for two-layer networks;
see \citep{bach2014breaking,bengio2006convex,lee1996efficient}.
Indeed, standard denseness results (see the remarks after Theorem 1) show
that any downhill direction in function space can be approximated by a
single layer network, such as a linear combination of sigmoid units,
or of ReLU units. (Notice that, because of the Lipschitz constraint, the
class of functions that such a network can compute has bounded
%
statistical complexity.) 
%
Thus, for residual networks for which the nonlinear components are
computed by standard architectures with a {\em variable} size, the main
result shows that every critical point (with this parameterization)
satisfying the Lipschitz constraint is a global optimum.


Typically, however, the size of the hidden layers is fixed.
As we have seen in Section~\ref{s:stuck}, while
Theorem~\ref{theorem:optimization} rules out the possibility of bad
critical points due to interaction between the layers,
there {\em can} be bad critical points due to dynamics within an
individual layer. On the other hand, 
a regularizer that promotes in each layer a small Lipschitz norm
deviation from the identity function
allows the use of a large number of hidden units while
avoiding overfitting.  This large number of hidden
units provides a variety of directions for improvement, and it has
been observed that training explores only a small subset of the
functions that could be represented with the parameters in each
layer (cf., \citep{denil2013predicting}).
Thus, we might gain useful insight into methods that optimize large
networks using a parametric gradient approach by viewing these methods
as an approximation to nonparametric, Fr\'echet gradient methods.



\acks{
Thanks to Dan Asimov for helpful discussions.
We gratefully acknowledge the support of the
NSF through grant IIS-1619362 and of the Australian Research Council
through an Australian Laureate Fellowship (FL110100281) and through
the Australian Research Council Centre of Excellence for Mathematical
and Statistical Frontiers (ACEMS).
}

\appendix
\section{Proof of Lemma~\protect\ref{lemma:smoothnessimplications}}
\label{a:smoothnessimplications}

Applying the gradient theorem for line integrals 
to each component of $h$ yields
    \[
      h(y) - h(x) = \int_0^1D h(x+t(y-x))(y-x)\,dt,
    \]
and this, together with smoothness, implies the first inequality:
      \begin{align*}
        \lefteqn{\left\|h(y) - \left(h(x)+D h(x)(y-x)\right)\right\|}
        & \\
          & = \left\| \int_0^1D h(x+t(y-x))(y-x)\,dt
            - D h(x)(y-x)\right\| \\
          & = \left\| \int_0^1\left(D h(x+t(y-x))-
            D h(x)\right)(y-x)\,dt \right\| \\
          & \le \int_0^1\left\| \left(D h(x+t(y-x))-
            D h(x)\right)(y-x)\right\|\,dt  \\
          & \le \alpha \int_0^1t\|y-x\|^2\,dt \\
          & = \frac{\alpha}{2} \|y-x\|^2.
      \end{align*}
  For the second, we write
      \begin{align*}
        h(y) - h(x) & = \int_0^1D h(x+t(y-x))(y-x)\,dt \\
            & = \int_0^1D h(0)(y-x)\,dt + 
            \int_0^1\left(Dh(0)-D h(x+t(y-x))\right)(y-x)\,dt, \\
      \end{align*}
  hence,
      \begin{align*}
        \left\|h(y) - h(x)\right\|
          & \le \left\| \int_0^1D h(0)(y-x)\,dt\right\|
            + \alpha\int_0^1\|x+t(y-x)\|\|y-x\|\,dt \\
          & = \|y-x\| + \alpha\|y-x\|R \\
          & = (1+\alpha R) \|y-x\|.
      \end{align*}



\bibliography{identity}

\end{document}